\def\isarxiv{1}
\def\paperTitle{SORSA: Singular Values and Orthonormal Regularized Singular Vectors Adaptation of Large Language Models}
\def\paperAuthor{
Yang Cao\thanks{\texttt{ycao4@wyomingseminary.org}. Wyoming Seminary.}
\and
Zhao Song\thanks{\texttt{magic.linuxkde@gmail.com}. UC Berkeley.}
}
\definecolor{mydarkblue}{rgb}{0,0.08,0.45}
\theoremstyle{plain}
\newtheorem{theorem}{Theorem}[section]
\newtheorem{lemma}[theorem]{Lemma}
\newtheorem{definition}[theorem]{Definition}
\newtheorem{assumption}[theorem]{Assumption}
\newcommand{\R}{\mathbb{R}}
\newcommand{\train}{\mathrm{train}}
\newcommand{\reg}{\mathrm{reg}}
\newcommand{\unreg}{\mathrm{unreg}}
\DeclareMathOperator{\diag}{diag}
\newcommand{\method}{\mathrm{SORSA}}
\begin{document}

\ifdefined\isarxiv

\date{}
\title{\paperTitle}
\author{\paperAuthor}

\else

\title{\paperTitle}

\author{%
  David S.~Hippocampus\thanks{Use footnote for providing further information
    about author (webpage, alternative address)---\emph{not} for acknowledging
    funding agencies.} \\
  Department of Computer Science\\
  Cranberry-Lemon University\\
  Pittsburgh, PA 15213 \\
  \texttt{hippo@cs.cranberry-lemon.edu} \\
}

\maketitle

\fi

\ifdefined\isarxiv
\begin{titlepage}
  \maketitle
  \begin{abstract}

In this paper, we propose Singular Values and Orthonormal Regularized Singular Vectors Adaptation, or SORSA, a novel parameter efficient fine-tuning (PEFT) method. Each SORSA adapter consists of two main parts: trainable principal singular weights $W_p = U_p \text{diag}(S_p) V^\top_p$, and frozen residual weights $W_r = U_r \text{diag}(S_r) V^\top_r$. These parts are initialized by performing singular value decomposition (SVD) on pre-trained weights. Moreover, we implement and analyze an orthonormal regularizer, which we prove could decrease the condition number of $W_p$ and make the optimization more efficient. SORSA adapters could be merged during inference, thus eliminating any inference latency. 
We also introduce a method to analyze the variation of the parameters by performing SVD and discuss and analyze SORSA's superiority in minimizing the alteration in the SVD aspect.
After all, SORSA shows a faster convergence than LoRA and PiSSA in our experiments. On the GSM-8K benchmark, Llama 2 7B adapted using SORSA achieved 56.03\% accuracy, surpassing LoRA (42.30\%) and Full FT (49.05\%).
We conclude that SORSA offers a new perspective on parameter-efficient fine-tuning, demonstrating remarkable performance.

  \end{abstract}
  \thispagestyle{empty}
\end{titlepage}

{\hypersetup{linkcolor=black}
\tableofcontents
}
\newpage

\else

\begin{abstract}

\end{abstract}

\fi


\section{Introduction}
\label{sec:introduction}

Pre-trained large language models (LLMs) demonstrate strong generalization capabilities, enabling them to perform a wide range of natural language processing (NLP) tasks \cite{bmr+20,aaa+23,tli+23,pga+24,gdj+24}. For adapting LLMs to specific downstream tasks, the default approach is often full parameter fine-tuning (Full FT), which updates all model parameters.

However, as LLMs continue to grow in scale, Full FT becomes increasingly impractical due to high computational and memory demands. To alleviate this, Parameter-Efficient Fine-Tuning (PEFT) methods have gained popularity, offering a cost-effective alternative by only updating a small subset of parameters.

Among PEFT approaches, LoRA~\cite{hsw+22} has emerged as a preferred choice due to its simplicity, efficiency, and minimal impact on inference-time latency. LoRA injects low-rank trainable matrices into the model, enabling effective fine-tuning with significantly reduced resource requirements.

Despite its efficiency, LoRA and similar PEFT methods face a major challenge under low-data regimes: they tend to overfit and degrade the model’s original generalization ability, and even cause catastrophic forgetting \cite{xlz+21,lmc+24,sats24,vsk24}. For instance, fine-tuning on a small mathematical dataset may cause the model to forget previously acquired capabilities such as code generation or commonsense reasoning.

Previous works \cite{ssk18,swl24,flgw25} have shown that neural networks with well-conditioned weight is able to provide a more robust performance. We further analyze this phenomenon in the context of PEFT, and identify the condition number of weight matrices as a critical factor affecting generalization during fine-tuning.
Our study shows that LoRA often amplifies the condition number, making the adapted model increasingly ill-conditioned and unstable.

To address this, we propose a new PEFT method that explicitly improves the conditioning of the model during training. Our approach introduces orthonormal regularization to maintain well-conditioned weights, thereby preserving the model’s generalization while enabling efficient adaptation. Empirical results show that our method significantly mitigates overfitting and outperforms existing baselines across various tasks.

We summarize our main contributions as follows:
\begin{itemize}
    \item We demonstrate that during PEFT, well-conditioned weights tend to have better generalization.
    \item We propose $\method$, a novel parameter-efficient fine-tuning (PEFT) method that combines low-rank SVD-based initialization with orthonormal regularization.
    \item We provide the convergence rate of $\method$ with gradient descent. (Theorem~\ref{thm:rate})
    \item We provide theoretical analysis showing that the orthonormal regularizer leads to better-conditioned weight updates. (Theorem~\ref{thm:condition})
    \item We provide a novel approach to analyze the parameter alteration during fine-tuning.
    \item We empirically demonstrate that $\method$ consistently outperforms or matches the performance of strong baselines, including full fine-tuning, LoRA, PiSSA, and AdaLoRA.
\end{itemize}
\section{Related Work}
\label{sec:related_work}




\paragraph{Efficient Computation in Machine Learning.}

As the increasing scale of training data and model parameters, developing efficient machine learning algorithms have become central focus of recent AI research.
In visual recognition, the acceleration of CNN~\cite{on15,hzrs16} and ViT~\cite{dbk+20} have long been a heated topic, especially for edge devices that have limited computation resources. Representative acceleration techniques including architectural simplification~\cite{shz+18,dzm+21}, quantization~\cite{wlw+16,lwh+21}, and pruning~\cite{yhw+22}. These techniques have significantly advance in real world applications, e.g. autonomous driving~\cite{jcx+23}, medical image segmentation~\cite{hjw22}, remote sensing~\cite{xzz+21},
emotion recognition~\cite{zlz21_micro,zlz21_face,lws+22}, and industrial automation. 
In content creation, diffusion models~\cite{hja20,rbl+22} and flow matching models~\cite{lcb+22,lgl22} are high-fidelity visual content generators. Acceleration in this area focuses on model architecture design~\cite{dpnt23,fhla24,cgl+25,ccl+25}, fast ODE sampler~\cite{xlc+24}, complexity analysis~\cite{gpp+24,kll+25}, distillation~\cite{mrg+23}. These works have inspired many future applications, e.g. education, drug discovery~\cite{wxhl24}, face synthesis~\cite{lwz+22}, and advertising~\cite{lzw+24}, and directions, e.g. benchmarks~\cite{cgh+25,ghh+25,ghs+25_physical,ghs+25_text} and theoretical explorations~\cite{hwl+24}. Graph Neural Networks are fundamental tools to model complex relational data~\cite{vcs+18,xhlj19,lls+25}, where important acceleration techniques include sparsification~\cite{mrm20,lcz+23}, GNN to MLP distillation~\cite{zlss22,hzl+23}, and lazy computation~\cite{nsj+21,zxf+24,xht+24}. These techniques has inspired applications including but not limited to spatio-temporal data mining~\cite{zwz+22,wac+22}, fake news detection~\cite{xwl+22,chl+24_gat}, human skeleton-based visual recognition~\cite{lcc+21,flz+21}, while also inspired aspects of graph neural networks including mitigating sensitive data influence~\cite{cpm23,zha24,yw25}, and robustness~\cite{gss+21,dlf+21}.

\paragraph{PEFT Methods.}

PEFT methods have been proposed to alleviate the inefficiency of full-parameter fine-tuning for large language models. These methods update only a small subset of parameters, often keeping the majority of the pre-trained model frozen, which significantly reduces memory and computational costs during training.

Adapter-based approaches were among the earliest PEFT methods, introduced by \cite{hgj+19}, where small trainable modules are inserted between frozen layers. Subsequent works such as \cite{lmf20} and \cite{hzm+21} explored more compact or parallelized adapter designs. However, all adapter-based methods generally incur additional inference-time latency, since the inserted modules are not mergeable with the original model weights.

LoRA~\cite{hsw+22} gained popularity for introducing low-rank trainable matrices added to the pre-trained weight matrices. This approach avoids inference latency while offering competitive performance. Variants of LoRA expand upon this idea: AdaLoRA~\cite{zcb+23} improves parameter efficiency by incorporating dynamic rank selection via singular value decomposition and pruning. DoRA~\cite{lwy+24} decouples the direction and magnitude of weight updates, achieving higher expressiveness at the cost of higher training-time computation. OLoRA~\cite{b24} uses orthogonal initialization via QR decomposition to improve convergence speed. PiSSA~\cite{mwz24} decomposes the pre-trained weight matrix and isolates a residual component, which remains frozen during training to improve convergence and stability.

Prompt-based PEFT methods, such as prefix-tuning~\cite{lac21}, prepend learnable tokens to the model input. Although these methods are simple to implement, they often lead to longer input sequences and require careful prompt engineering. Other recent advances include GaLore~\cite{zzc+24}, which reduces memory usage through low-rank gradient accumulation, and LISA~\cite{pld+24}, which selectively fine-tunes critical layers using layer-wise importance sampling.

\paragraph{Condition Numbers in Neural Networks}

\section{Preliminary} \label{sec:preli}

In this section, we first introduce our notations, then provide preliminary for our work.

\subsection{Notations}
We used $\R$ to denote real numbers. We use $A \in \R^{n \times d}$ to denote an $n \times d$ size matrix where each entry is a real number.
We use $I_d$ to denote the $d \times d$ identity matrix.
We use $A^\top$ to denote the transpose of a matrix $A$. 
We use $A^{1/2}$ to denote element-wise square root of the matrix $A$, i.e. $(A^{1/2})_{i,j} = (A_{i,j})^{1/2}$.
We use $\|A\|_F$ to denote Frobenius norm of matrix $A$.
We use $\|A\|$ to denote spectral norm of matrix $A$.
We use $A \preceq B$ to denote the positive semidefinite order, i.e. for symmetric $A, B \in \R^{d \times d}$, $
A \preceq B \Longleftrightarrow B - A \succeq 0$.

\subsection{PEFT Methods}

\paragraph{LoRA} LoRA~\cite{hsw+22} represents the weight as a low-rank decomposition:
\begin{align*}
    W = W_0 + BA,
\end{align*}
where $W_0 \in \R^{m \times n}$ is the frozen pre-trained weight, $A \in \R^{m \times r}$ is Gaussian-initialized, and $B \in \R^{r \times n}$ is initialized with zeros.

\paragraph{AdaLoRA.} AdaLoRA~\cite{zcb+23} introduces dynamic rank adaptation via SVD, and prunes less significant singular values to reduce parameter overhead.

\paragraph{DoRA.} DoRA~\cite{lwy+24} reformulates the weight update as a normalized decomposition:
\[
W = m \cdot \frac{W_0 + BA}{\|W_0 + BA\|_c},
\]
where $m = \|W_0 + BA\|_c$ is the column-wise norm. This improves model capacity but increases computational cost per step.

\paragraph{OLoRA.} OLoRA~\cite{b24} initializes $A$ and $B$ using QR decomposition, ensuring orthonormality in the initial adapter weights, which empirically speeds up convergence.

\paragraph{PiSSA.} PiSSA~\cite{mwz24} decomposes $W_0$ via SVD as $W_0 = U \Sigma V^\top$ and splits it into:
\begin{align*}
W_\mathrm{pri} = AB,\quad \text{where } A = U_p S_p^{1/2},\quad B = S^{1/2}_p V^\top_p,
\end{align*}
with $U_p, S_p, V_p$ being the top-$r$ components. The residual $W_\mathrm{res} = U_r S_r V_r^\top$ remains frozen during training. This results in faster convergence and improved model fit.

\subsection{Condition Number}
We here provide a formal definition for the condition number.

\begin{definition}[Condition Number] \label{def:kappa}
Let $A \in \R^{m \times n}$ be a matrix with full column rank. The \emph{condition number} of $A$ with respect to the spectral norm is defined as
\begin{align*}
    \kappa(A) := \frac{\sigma_{\max}(A)}{\sigma_{\min}(A)} = \|A\| \cdot \|A^{-1}\|,
\end{align*}
where $\sigma_{\max}(A)$ and $\sigma_{\min}(A)$ are the largest and smallest nonzero singular values of $A$.
\end{definition}
\section{Our Method}
\label{sec:method}

Giving a matrix $W \in \R^{m \times n}$, with $m \geq n$ (without loss of generality), we could perform SVD to decompose $W$ by $W = U\diag(S) V^\top$. Here, $U \in \R^{m \times k}$ is a matrix of left singular vectors and has orthonormal columns, $V \in \R^{n \times k}$ is a matrix of right singular vectors and has orthonormal columns, and $S \in \R^{k}$ are singular values $\sigma^1,\sigma^2 \ldots \sigma^k$ arranged in descending order. $\diag(S)$ is constructed by placing the elements of $S \in \R^k$ along the main diagonal, with all other elements zero.

According to our SVD notations, given a rank $r$ where $r \ll k$, we could perform the low-rank approximation by selecting the first $r$ items on the diagonal of $\Sigma$, which is the first $r$ most significant singular values, and also select the first $r$ columns of $U$ and first $r$ rows of $V^\top$, which correspond to the selected singular values. By performing SVD low-rank approximation, we could get a low-rank matrix that preserves the largest significant values and vectors, containing the matrix's ``most essential'' data.

We use $\Sigma_p \in \R^{n \times n}$ to denote a diagonal matrix where first $r$ entries are non-zero and all the remaining $n-r$ entries. Similarly, we use $\Sigma_r \in \R^{n \times n}$ to denote a diagonal matrix where first $n-r$ entries are non-zero and all the remaining $r$ entries are zeros. Let $\Sigma = \Sigma_p + \Sigma_r$. Let SVD of $W$ be $W = U \Sigma V^\top$.

Therefore, for a pre-trained weight $W_0 \in \R^{m \times n}$, we could split it based on its singular value into principal weight $W_p$ and residual weight $W_r$,
\begin{align*}
W_p := \underbrace{ U }_{m \times n} \underbrace{ \Sigma_p }_{n \times n} \underbrace{ V^\top }_{n \times n}  \in \R^{m \times n}, ~~~~ W_r := \underbrace{ U }_{m \times n} \underbrace{ \Sigma_r }_{n \times n} \underbrace{ V^\top }_{n \times n} \in \R^{m \times n}.
\end{align*}
Here, $U$ represents the matrix of left singular vectors, $S$ represents the singular values, $\diag(W)$ denotes a function to form a diagonal matrix from $W$, and $V$ represents the matrix of right singular vectors.
Since $\Sigma_p$ is zeroed out in the last $n-r$ entries, and $\Sigma_r$ is zeroed out in the first $r$ entries, we can easily find low-rank equivalents of $W_p$ and $W_r$. Specifically,
\begin{align*}
    W_p = \underbrace{U_p}_{m \times r} \underbrace{S_p}_{r \times r} \underbrace{V_p^\top}_{r \times n}, ~~~~~~ W_r = \underbrace{U_r}_{m \times (n-r)} \underbrace{S_r}_{(n-r) \times (n-r)} \underbrace{V_r^\top}_{(n - r) \times n},
\end{align*}
where $U_p$ is the first $r$ columns of $U$, $S_p$ is the first $r$ columns and rows of $\Sigma_p$, $V_p$ is the first $r$ columns of $V$, $U_r$ is the last $n-r$ columns of $U$, $S_r$ is the last $n-r$ columns and rows of $\Sigma_r$, $V_r$ is the last $n-r$ column of $V$.

The initialization of $W_r$ in $\method$ is same as PiSSA \cite{mwz24}. Nevertheless, unlike PiSSA which merge $S_p$ with $U_p$ and $V^\top_p$ into $A$ and $B$ by $A = U_p S_p^{1/2}$ and $B = S_p^{1/2}V^\top_p$, $\method$ remains $U_p$, $S_p$, and $V^\top_p$ in separate weight. $\method$ is defined by Eq.~\eqref{eq:def}, initially equivalent to the pre-trained weight $W_0$. During training, $W_r$ remains frozen, and only $U_p$, $S_p$, and $V^\top_p$ are updated.

$\method$ is defined as:
\begin{align} \label{eq:def}
\method(x) := x (W_r + W_p) = x W_r + x U_p \diag(S_p) V^\top_p. 
\end{align}


We adopt an orthonormal regularizer for $U_p$ and $V_p$.
\begin{definition}[Orthonormal regularizer] \label{def:reg}
The orthonormal regularizer is defined as
\begin{align*}
    \mathcal{L}_\reg(U_p,V_p) := \|U^\top_p U_p - I_m\|_F^2 + \|V^\top_p V_p - I_n\|_F^2.
\end{align*}
\end{definition}
The regularizer could enhance their orthonormality during training. We discuss and verify its importance and effectiveness in Section~\ref{sec:ana} and \ref{sec:theory}.

Therefore, parameter updating of $W_p$ in a $\method$ adapter at training step $t$ could be expressed as:
\begin{align}
W_{p,t+1} = & W_{p,t} - \eta_t \nabla_{W_{p,t}} \mathcal{L}_\train - \gamma_t \nabla_{W_{p,t}} \mathcal{L}_\reg. \label{eq:update_original}
\end{align}

At training step $t$, $\nabla_{W_{p,t}} \mathcal{L}_\train$ denotes the gradient of $\mathcal{L}_\train$ respect to $W_{p,t}$, and $\nabla_{W_{p,t}} \mathcal{L}_\reg$ denotes the gradient of the orthonormal regularizer loss $\mathcal{L}_\reg$ respect to $W_{p,t}$. $\eta_t$ and $\gamma_t$ are the learning rates for training loss and regularizer loss at step $t$, respectively.

We update the $\method$ as the following for implementation simplicity
\begin{align}
W_{p,t+1} = & W_{p,t} - \eta_t \left( \nabla_{W_{p,t}} \mathcal{L}_{\train} + \frac{\gamma}{\eta_d} \nabla_{W_{p,t}} \mathcal{L}_\reg \right), \label{eq:update}
\end{align}
$\eta_d$ is the maximum learning rate from the scheduler. This implementation allows us to use only one optimizer and scheduler to deal with two different learning rates separately.
\section{Theoretical Analysis}
\label{sec:theory}

\subsection{Convergence Rate}

We begin by analyzing the convergence behavior of gradient descent when applied to our objective function, which consists of a data-fitting loss $L_\train$ and our orthonormal regularizer $\mathcal{L}_\reg$.

\begin{lemma}[Lipschitz continuity of $\mathcal{L}_\reg$]
\label{lem:lipschitz:squared}
Suppose
$\|U_p\|_F \leq M_U$ and $\|V_p\|_F \leq M_V$.
Then $\mathcal{L}_\reg$ is Lipschitz continuous in the Frobenius norm:
\begin{align*}
|\mathcal{L}_\reg(U_p^1,V_p^1)-\mathcal{L}_\reg(U_p^2,V_p^2)|
\leq
L_\reg(\|U_p^1-U_p^2\|_F+\|V_p^1-V_p^2\|_F),
\end{align*}
where
\begin{align*}
L_\reg
= 4 M_U (M_U^2+1) + 4 M_V (M_V^2+1).
\end{align*}
\end{lemma}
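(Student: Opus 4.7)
The plan is to separate the regularizer by variable, bound each half via a mean-value argument on the Frobenius ball, and then combine the two pieces by the triangle inequality. Write $\mathcal{L}_\reg(U_p,V_p) = f(U_p) + g(V_p)$ with $f(X) := \|X^\top X - I\|_F^2$ (and analogously for $g$). By the triangle inequality,
\begin{align*}
|\mathcal{L}_\reg(U_p^1,V_p^1) - \mathcal{L}_\reg(U_p^2,V_p^2)|
\leq |f(U_p^1) - f(U_p^2)| + |g(V_p^1) - g(V_p^2)|,
\end{align*}
so it suffices to prove a Frobenius-Lipschitz bound for $f$ on the ball $\{\|X\|_F \leq M_U\}$ with constant $4 M_U(M_U^2 + 1)$, and similarly for $g$ with $4 M_V(M_V^2 + 1)$.

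The core step is to expand the square so that the identity matrix appears only as a constant additive term. Writing $f(X) = \|X^\top X\|_F^2 - 2\,\tr(X^\top X) + \tr(I)$, the last summand is constant in $X$ and the middle is $-2\|X\|_F^2$, so
\begin{align*}
\nabla_X f(X) = 4\, X X^\top X - 4 X.
\end{align*}
This is the crucial simplification: the $\tr(I)$ term, which would otherwise introduce a dimension factor $\sqrt{m}$ via $\|I\|_F$, drops out under differentiation. Bounding via submultiplicativity $\|AB\|_F \leq \|A\|_F\|B\|_F$ gives
\begin{align*}
\|\nabla_X f(X)\|_F \leq 4\|X\|_F\|X^\top X\|_F + 4\|X\|_F \leq 4\|X\|_F^3 + 4\|X\|_F \leq 4 M_U(M_U^2 + 1),
\end{align*}
uniformly over the ball $\{\|X\|_F \leq M_U\}$.

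Next I would apply the mean value inequality along the straight-line path $X(\alpha) = \alpha X_1 + (1-\alpha) X_2$, $\alpha \in [0,1]$. Since $\|X(\alpha)\|_F \leq M_U$ by convexity of the Frobenius ball, the gradient bound above holds throughout the segment, giving $|f(X_1)-f(X_2)| \leq 4 M_U(M_U^2+1)\,\|X_1 - X_2\|_F$. Repeating for $g$ on $\{\|Y\|_F \leq M_V\}$ and summing yields the claimed inequality with $L_\reg = 4M_U(M_U^2+1) + 4M_V(M_V^2+1)$.

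The main obstacle is conceptual rather than computational: a naive bound on $|f(X_1)-f(X_2)|$ using the factorization $a^2-b^2=(a-b)(a+b)$ applied to $\|g(X)\|_F$ would produce a $\sqrt{m}$ term from $\|I\|_F$, contradicting the stated dimension-free constant. Spotting that one must instead differentiate the expanded form (where the identity contributes only through a constant) is the only subtle point; once that is done, the remaining estimates are routine applications of submultiplicativity and the triangle inequality.
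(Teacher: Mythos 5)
Your proof is correct and follows essentially the same route as the paper's: compute the gradient of the regularizer and turn a uniform gradient bound on the Frobenius ball into a Lipschitz estimate via the mean value inequality (the paper's factored gradient $4U_p(U_p^\top U_p - I)$ is exactly your expanded $4U_pU_p^\top U_p - 4U_p$). The one substantive difference is in the key estimate: the paper bounds $\|U_p\|\,\|U_p^\top U_p - I\|_F \le M_U(M_U^2+1)$, which silently treats the Frobenius norm of the identity as at most $1$ and thus glosses over a $\sqrt{r}$ factor, whereas your term-by-term bound $\|4XX^\top X - 4X\|_F \le 4\|X\|_F^3 + 4\|X\|_F$ legitimately yields the dimension-free constant $4M_U(M_U^2+1)$; your per-variable splitting plus the triangle inequality also gives a slightly stronger per-block Lipschitz bound that implies the stated one. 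In short, same strategy, but your bookkeeping actually justifies the constant the paper asserts.
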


\begin{proof}
Compute the partial gradients
\begin{align*}
\nabla_{U_p} \mathcal{L}_\reg &= 4 U_p(U_p^\top U_p - I_m), \\
\nabla_{V_p} \mathcal{L}_\reg &= 4 V_p(V_p^\top V_p - I_n).
\end{align*}
Hence
\begin{align*}
\|\nabla \mathcal{L}_\reg\|_F
&\leq 4 \|U_p\| \|U_p^\top U_p - I_m\|_F + 4 \|V_p\| \|V_p^\top V_p - I_n\|_F \\
&\leq 4 M_U (M_U^2 + 1) + 4 M_V (M_V^2 + 1).
\end{align*}
By the mean value theorem for vector functions,
\begin{align*}
|\mathcal{L}_\reg(X) - \mathcal{L}_\reg(Y)| \leq \max_Z \|\nabla \mathcal{L}_\reg(Z)\|_F \|X - Y\|_F,
\end{align*}
and the claimed bound follows.
\end{proof}

We now make two standard assumptions to ensure well-behaved optimization.

\begin{assumption}[Smoothness and strong convexity of $L_\train$]
\label{asm:train}
The data term $L_\train(W_p)$ is twice differentiable, 
$\mu_\train$-strongly convex and $L_\train$-smooth:
\begin{align*}
\mu_\train I \preceq \nabla^2 L_\train(W) \preceq  L_\train I
\quad \text{for all } W_p.
\end{align*}
\end{assumption}

\begin{assumption}[Hessian lower bound for $\mathcal{L}_\reg$]
\label{asm:reg_hess}
There is a constant $C_\reg \geq 0$ such that
\begin{align*}
\nabla^2 \mathcal{L}_\reg(W) \succeq -C_\reg I
\quad \text{for all } W = (U_p,V_p).
\end{align*}
\end{assumption}

The next theorem establishes that, under these assumptions, $\method$ converges linearly.

\begin{theorem}[Linear convergence of $\method$]
\label{thm:rate}
Let
\begin{align*}
F(W_p) = L_\train(W_p) + \gamma \mathcal{L}_\reg(W_p),
\end{align*}
and suppose Assumptions \ref{asm:train} and \ref{asm:reg_hess} hold.  If
\begin{align*}
0 < \gamma < \frac{\mu_\train}{C_\reg},
\quad
\eta \in (0, \frac{2}{L_\train + \gamma L_\reg} ),
\end{align*}
then gradient descent
\begin{align*}
W_p^{t+1} = W_p^t - \eta \nabla F(W_p^t)
\end{align*}
satisfies
\begin{align*}
F(W_p^t) - F(W_p^*) \leq (1 - \eta (\mu_\train - \gamma C_\reg))^t (F(W_p^0) - F(W_p^*)).
\end{align*}
In particular, setting $\eta = 1/(L_\train + \gamma L_\reg)$ gives
\begin{align*}
F(W_p^t) - F(W_p^*) \leq (1 - \frac{\mu_\train - \gamma C_\reg}{L_\train + \gamma L_\reg} )^t (F(W_p^0) - F(W_p^*)).
\end{align*}
\end{theorem}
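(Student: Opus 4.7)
The plan is to reduce the statement to the classical linear convergence analysis of gradient descent applied to a strongly convex, smooth objective, with the objective being the combined loss $F = L_\train + \gamma \mathcal{L}_\reg$. I would first verify that $F$ inherits both strong convexity and smoothness from its two summands under the stated hypotheses, and then combine the descent lemma with the Polyak--{\L}ojasiewicz (PL) inequality to obtain the geometric contraction.

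For the strong convexity of $F$, Assumption~\ref{asm:train} supplies $\nabla^2 L_\train \succeq \mu_\train I$ and Assumption~\ref{asm:reg_hess} supplies $\nabla^2 \mathcal{L}_\reg \succeq -C_\reg I$; adding them with weight $\gamma$ gives
\begin{align*}
\nabla^2 F \succeq (\mu_\train - \gamma C_\reg) I.
\end{align*}
The hypothesis $\gamma < \mu_\train/C_\reg$ makes $\mu_F := \mu_\train - \gamma C_\reg$ strictly positive, so $F$ is $\mu_F$-strongly convex. For the smoothness side, I would combine the $L_\train$-smoothness of $L_\train$ with the gradient-Lipschitz constant of $\mathcal{L}_\reg$ extracted from the explicit gradient formulas for $\nabla_{U_p}\mathcal{L}_\reg$ and $\nabla_{V_p}\mathcal{L}_\reg$ derived in Lemma~\ref{lem:lipschitz:squared}, yielding $\nabla^2 F \preceq L_F I$ with $L_F := L_\train + \gamma L_\reg$.

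Next, I would apply the descent lemma for the $L_F$-smooth function $F$: for any $\eta \in (0, 2/L_F)$,
\begin{align*}
F(W_p^{t+1}) \leq F(W_p^t) - \eta \bigl(1 - \tfrac{\eta L_F}{2}\bigr) \|\nabla F(W_p^t)\|_F^2 .
\end{align*}
The $\mu_F$-strong convexity yields the PL inequality $\|\nabla F(W_p^t)\|_F^2 \geq 2\mu_F(F(W_p^t) - F(W_p^*))$. Plugging this into the descent inequality and subtracting $F(W_p^*)$ produces a one-step contraction
\begin{align*}
F(W_p^{t+1}) - F(W_p^*) \leq \bigl(1 - 2\eta\mu_F(1 - \eta L_F/2)\bigr) \bigl(F(W_p^t) - F(W_p^*)\bigr),
\end{align*}
and iterating $t$ times gives the claimed geometric decay. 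Specializing $\eta = 1/(L_\train + \gamma L_\reg) = 1/L_F$ collapses the bracket to $1 - \mu_F/L_F$, which is exactly the sharper rate stated in the theorem.

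The main obstacle is bridging Lemma~\ref{lem:lipschitz:squared}, which bounds $\|\nabla\mathcal{L}_\reg\|_F$ (function-value Lipschitzness), to the gradient-Lipschitz property actually required by the descent lemma. To close this gap I would differentiate the gradient expressions from that lemma once more and bound the resulting Hessian blocks in operator norm; this step requires a uniform bound on $\|U_p\|$ and $\|V_p\|$ along the trajectory. Because the regularizer penalizes deviation from orthonormality, the iterates remain in a bounded sublevel set of $F$ whenever the step size is admissible, so such a uniform bound should be available. Making this boundedness argument precise so that a single constant coinciding with the $L_\reg$ of Lemma~\ref{lem:lipschitz:squared} can serve as the smoothness constant of $\gamma\mathcal{L}_\reg$ is the delicate technical point of the argument.
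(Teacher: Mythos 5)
Your plan follows the same route as the paper: verify that $F$ inherits a uniform Hessian lower bound $\mu_F I = (\mu_\train - \gamma C_\reg)I$ and upper bound $L_F I = (L_\train + \gamma L_\reg)I$ from the two summands, then invoke the standard strongly-convex--smooth gradient-descent rate. The paper's own proof does exactly this and then defers to ``standard gradient descent guarantees,'' so at that level the two arguments coincide. Two remarks are worth making.

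First, you have correctly put your finger on a real gap that the paper glosses over. Lemma~\ref{lem:lipschitz:squared} produces $L_\reg$ as a bound on $\|\nabla\mathcal{L}_\reg\|_F$, i.e.\ a \emph{function-value} Lipschitz constant, whereas the inequality $\nabla^2 F \preceq (L_\train + \gamma L_\reg)I$ quietly uses $L_\reg$ as if it were a bound on $\|\nabla^2\mathcal{L}_\reg\|$, i.e.\ a \emph{gradient} Lipschitz constant. These are different objects: $\mathcal{L}_\reg$ is a degree-four polynomial in $(U_p,V_p)$, so its Hessian norm grows quadratically in $\|U_p\|,\|V_p\|$, and the correct smoothness constant on a ball of radius $M$ scales like $M^2$, not like the $M^3$-type bound of Lemma~\ref{lem:lipschitz:squared}. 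Your plan (differentiate once more, bound the Hessian blocks, and control $\|U_p\|,\|V_p\|$ along the trajectory via a sublevel-set argument) is the right way to repair this; be aware that the paper itself does not carry it out, and the constant you obtain will generally not coincide with the $L_\reg$ of Lemma~\ref{lem:lipschitz:squared}.

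Second, the one-step contraction you derive from the descent lemma plus the PL inequality is
\begin{align*}
F(W_p^{t+1}) - F(W_p^*) \leq \bigl(1 - 2\eta\mu_F(1 - \tfrac{\eta L_F}{2})\bigr)\bigl(F(W_p^t) - F(W_p^*)\bigr),
\end{align*}
which dominates the theorem's claimed factor $1 - \eta\mu_F$ only when $2(1 - \eta L_F/2) \geq 1$, i.e.\ when $\eta \leq 1/L_F$. On the sub-interval $\eta \in (1/L_F, 2/L_F)$, which the theorem also admits, your bound is strictly weaker than the stated $(1 - \eta\mu_F)^t$, so ``iterating gives the claimed decay'' is not quite justified there. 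This is as much a flaw in the theorem statement as in your proof, but if you want the derivation to match what is actually asserted you should either restrict to $\eta \leq 1/L_F$ (which suffices for the specialized $\eta = 1/L_F$ rate) or state the contraction factor you actually obtain.
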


\begin{proof}
By Assumption \ref{asm:train}, $\nabla^2 L_\train \geq \mu_\train I$ and by Assumption \ref{asm:reg_hess}, $\nabla^2 (\gamma \mathcal{L}_\reg) \geq -\gamma C_\reg I$.
Hence,
\begin{align*}
\nabla^2 F = \nabla^2 L_\train + \gamma \nabla^2 \mathcal{L}_\reg \geq (\mu_\train - \gamma C_\reg) I,
\end{align*}
and also $\nabla^2 F \leq (L_\train + \gamma L_\reg) I$. The claimed rate follows from standard gradient descent guarantees.
\end{proof}

\subsection{Condition Number}

We now analyze how the regularizer in $\method$ helps maintain a smaller condition number for the weight matrix. A well-conditioned weight matrix is essential for stable optimization and good generalization.

We begin with a lemma that shows the singular values of the regularized weight matrix stay close to those of the unregularized one, provided the regularizer gradient is small.

\begin{lemma} \label{lem:sig}
Let 
\begin{align*}
W_p^{\unreg,t} = U_p^{\unreg,t} S_p^{\unreg,t} (V_p^{\unreg,t})^\top,
\quad
W_p^{\reg,t} = U_p^{\reg,t} S_p^{\reg,t} (V_p^{\reg,t})^\top
\end{align*}
be the outputs of one step of $\method$ at step $t$ with and without regularizer, respectively.

If $\|\nabla_{W_p} \mathcal{L}_\reg\|_F \leq \epsilon_{\nabla}$, then for each singular value $\sigma_i$,
\begin{align*}
(1 - \epsilon)\sigma_i^{\unreg,t} \leq \sigma_i^{\reg,t} \leq (1 + \epsilon)\sigma_i^{\unreg,t},
\end{align*}
where $\epsilon = \gamma \epsilon_{\nabla}$.
\end{lemma}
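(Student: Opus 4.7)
The plan is to view the regularized step as an additive perturbation of the unregularized step and invoke a singular-value perturbation inequality. Starting from Eq.~\eqref{eq:update_original} applied to the same iterate, both the regularized and unregularized updates share the identical $-\eta_t \nabla_{W_p} \mathcal{L}_\train$ contribution, so subtracting them leaves only the regularizer term: $W_p^{\reg,t} - W_p^{\unreg,t} = -\gamma_t \nabla_{W_p} \mathcal{L}_\reg$. Using the standard inequality $\|\cdot\| \leq \|\cdot\|_F$ together with the hypothesis $\|\nabla_{W_p} \mathcal{L}_\reg\|_F \leq \epsilon_{\nabla}$, the perturbation is controlled in spectral norm by $\|W_p^{\reg,t} - W_p^{\unreg,t}\| \leq \gamma \epsilon_{\nabla}$.

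Next I would invoke Weyl's inequality for singular values, namely $|\sigma_i(A+E) - \sigma_i(A)| \leq \|E\|$, applied with $A = W_p^{\unreg,t}$ and $E$ equal to the single-step regularizer displacement above. This immediately delivers the additive bound $|\sigma_i^{\reg,t} - \sigma_i^{\unreg,t}| \leq \gamma \epsilon_{\nabla}$ for every index $i$. Dividing through by $\sigma_i^{\unreg,t}$ and identifying $\epsilon = \gamma \epsilon_{\nabla}$ produces the two-sided multiplicative bound stated in the lemma, $(1-\epsilon)\sigma_i^{\unreg,t} \leq \sigma_i^{\reg,t} \leq (1+\epsilon)\sigma_i^{\unreg,t}$.

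The main obstacle I anticipate is precisely the transition from the additive Weyl bound to the multiplicative form claimed in the statement. Formally the two are only equivalent when $\sigma_i^{\unreg,t}$ is bounded below by a suitable constant (for instance, when the principal singular values of the pre-trained weights satisfy $\sigma_i^{\unreg,t} \geq 1$, which is typical for well-normalized LLM layers). If one prefers to avoid this implicit assumption, the cleanest alternative is to replace Weyl's inequality with a relative (Ostrowski-type) singular-value perturbation bound, which produces multiplicative deviations directly. I would therefore either add an explicit lower-bound hypothesis on $\sigma_i^{\unreg,t}$ or swap in the relative perturbation inequality, and otherwise keep the three-line structure above: difference of updates, spectral-norm control from the Frobenius bound, and singular-value perturbation.
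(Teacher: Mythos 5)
Your proposal takes essentially the same route as the paper's proof: subtract the two single-step updates so that only the $\gamma$-scaled regularizer gradient remains, bound it via $\|\cdot\| \leq \|\cdot\|_F \leq \gamma \epsilon_{\nabla}$, and apply Weyl's inequality to get $|\sigma_i^{\reg,t} - \sigma_i^{\unreg,t}| \leq \gamma \epsilon_{\nabla}$. Your worry about passing from this additive bound to the stated multiplicative bound is well placed — the paper simply asserts that ``the last inequality follows directly,'' which implicitly requires $\sigma_i^{\unreg,t} \geq 1$ (or a relative perturbation bound, as you suggest) — so your version is, if anything, more careful than the original.
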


\begin{proof}
We have
\begin{align*}
W_p^\reg - W_p^{\unreg,t} = \gamma \nabla_{W_p} \mathcal{L}_\reg,
\quad
\|W_p^\reg - W_p^{\unreg,t}\|_F = \gamma \epsilon_{\nabla}.
\end{align*}
By Weyl’s inequality,
\begin{align*}
|\sigma_i^{\reg,t} - \sigma_i^{\unreg,t}| \leq \|W_p^{\reg,t} - W_p^{\unreg,t}\| \leq \|W_p^{\reg,t} - W_p^{\unreg,t}\|_F \leq \gamma \epsilon_{\nabla}.
\end{align*}
The last inequality follows directly.
\end{proof}

We now prove our main theorem: the condition number of the regularized weight matrix is strictly smaller than that of the unregularized one.

\begin{theorem} \label{thm:condition}
Under the setup of Lemma~\ref{lem:sig}, assume that $\nabla \mathcal{L}_\train$ is invariant for all $t > 0$.
Let the orthonormal regularizer be defined in Definition~\ref{def:reg}
Then for every iteration $t > 0$,
\begin{align*}
\kappa(W_p^{\reg,t})
<
\kappa(W_p^{\unreg,t}),
\end{align*}
where $\kappa$ is defined in Definition~\ref{def:kappa}.
\end{theorem}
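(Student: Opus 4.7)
The plan is to isolate the effect of the orthonormal regularizer on the singular values of $W_p$ and show that it acts as a contraction on the spectral ratio $\sigma_{\max}/\sigma_{\min}$. Since the assumption fixes $\nabla \mathcal{L}_\train$ across iterations, the \emph{difference} between the regularized and unregularized updates at step $t$ is entirely due to $-\gamma_t \nabla_{W_p}\mathcal{L}_\reg$ (cf.\ Eq.~\eqref{eq:update_original}). Thus it suffices to show that this extra term simultaneously (weakly) decreases $\sigma_{\max}(W_p)$ and (weakly) increases $\sigma_{\min}(W_p)$, strictly in at least one direction whenever the gradient is nonzero.

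First, I would compute the factor gradients $\nabla_{U_p}\mathcal{L}_\reg = 4 U_p(U_p^\top U_p - I)$ and $\nabla_{V_p}\mathcal{L}_\reg = 4 V_p(V_p^\top V_p - I)$ (already derived in Lemma~\ref{lem:lipschitz:squared}) and read off their action through the SVDs $U_p = \tilde U_u \tilde\Sigma_u \tilde V_u^\top$ and $V_p = \tilde U_v \tilde\Sigma_v \tilde V_v^\top$. A short calculation shows $\nabla_{U_p}\mathcal{L}_\reg = 4 \tilde U_u \tilde\Sigma_u(\tilde\Sigma_u^2 - I)\tilde V_u^\top$, so a gradient step with stepsize $\gamma$ shifts each singular value $s$ of $U_p$ to $s - 4\gamma s(s^2-1)$. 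For $\gamma$ in the admissible range of Theorem~\ref{thm:rate}, this map is a strict contraction toward $1$: singular values above $1$ strictly decrease and those below $1$ strictly increase. The same holds for $V_p$.

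Next, I would transfer this effect to $W_p = U_p \diag(S_p) V_p^\top$. Using the submultiplicativity bound $\sigma_{\max}(W_p) \le \sigma_{\max}(U_p)\, \sigma_{\max}(S_p)\, \sigma_{\max}(V_p)$ and the reverse bound $\sigma_{\min}(W_p) \ge \sigma_{\min}(U_p)\, \sigma_{\min}(S_p)\, \sigma_{\min}(V_p)$ (both follow from the variational characterization, since $S_p$ is square and invertible), the contraction toward $1$ of the factor spectra yields $\sigma_{\max}(W_p^{\reg,t}) \le \sigma_{\max}(W_p^{\unreg,t})$ and $\sigma_{\min}(W_p^{\reg,t}) \ge \sigma_{\min}(W_p^{\unreg,t})$, with at least one inequality strict whenever $U_p$ or $V_p$ has a singular value $\neq 1$ (which is generic for $t>0$ under a nontrivial $\nabla \mathcal{L}_\train$). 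Dividing the two bounds gives the desired strict inequality $\kappa(W_p^{\reg,t}) < \kappa(W_p^{\unreg,t})$. Lemma~\ref{lem:sig} enters to quantify that the perturbation is small enough that the sign of each individual singular-value shift is preserved, i.e.\ it does not overshoot $1$.

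The main obstacle I expect is the third step: showing that the directional pull of the regularizer on $U_p, V_p$ cleanly translates into a directional pull on the extreme singular values of the product $W_p = U_p \diag(S_p) V_p^\top$, rather than merely into a perturbation of unknown sign. The sub-/supermultiplicative inequalities above are tight only when the extreme singular vectors of $U_p$, $S_p$, $V_p$ align, so an honest treatment requires either restricting to the early training regime (where $U_p, V_p$ remain close to orthonormal and these alignments hold to first order) or doing a perturbative expansion in $\|U_p^\top U_p - I\|$ and $\|V_p^\top V_p - I\|$, keeping track of the sign of the first-order correction. The stepsize constraint from Theorem~\ref{thm:rate} is precisely what keeps the regularizer's fixed-point iteration in the contractive regime and should suffice to make the sign analysis go through.
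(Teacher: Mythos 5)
Your proposal follows essentially the same skeleton as the paper's proof: factor-wise submultiplicative bounds on $\kappa(W_p)$ with $W_p = U_p \diag(S_p) V_p^\top$, an argument that the orthonormal regularizer improves the conditioning of the factors $U_p$ and $V_p$, and Lemma~\ref{lem:sig} to control the singular-value perturbation. Your second step is in fact more concrete than the paper's: computing that a regularizer step sends each singular value $s$ of $U_p$ to $s - 4\gamma s(s^2-1)$, i.e.\ contracts the factor spectra toward $1$, supplies an explicit mechanism for what the paper's Step~3 only asserts ($\kappa(U_p^{\reg,t}) < \kappa(U_p^{\unreg,t})$, $\kappa(V_p^{\reg,t}) < \kappa(V_p^{\unreg,t})$).

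The step you yourself flag as the obstacle is, however, a genuine gap, and it is exactly the step the theorem needs. From $\sigma_{\max}(W_p^{\reg,t}) \le \sigma_{\max}(U_p^{\reg,t})\,\sigma_{\max}(S_p^{\reg,t})\,\sigma_{\max}(V_p^{\reg,t})$ together with the factor-wise contraction, you can only bound $\sigma_{\max}(W_p^{\reg,t})$ by the product of the \emph{unregularized} factor maxima; that product is an \emph{upper} bound on $\sigma_{\max}(W_p^{\unreg,t})$, not a lower bound, so the desired comparison $\sigma_{\max}(W_p^{\reg,t}) \le \sigma_{\max}(W_p^{\unreg,t})$ does not follow, and symmetrically for $\sigma_{\min}$. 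Sub-/supermultiplicativity is tight only when the extreme singular vectors of the factors align, so without the perturbative sign analysis you gesture at (but do not carry out), the two condition numbers cannot be compared this way. Moreover, for the \emph{strict} inequality you must beat the slack from Lemma~\ref{lem:sig}: the lemma by itself only gives $\kappa(S_p^{\reg,t}) \le \frac{1+\epsilon_t}{1-\epsilon_t}\,\kappa(S_p^{\unreg,t})$ with $\frac{1+\epsilon_t}{1-\epsilon_t} > 1$, so the decrease in $\kappa(U_p)\kappa(V_p)$ must be quantified against $\epsilon_t$, which your sign-preservation remark does not do. For what it is worth, the paper's own Step~4 faces the same difficulty and resolves it only by invoking the reverse bound $\kappa(W_p^{\unreg,t}) \ge \kappa(U_p^{\unreg,t})\kappa(S_p^{\unreg,t})\kappa(V_p^{\unreg,t})$, which does not hold in general; so your honest identification of the obstacle is well placed, but as submitted the proposal does not close the argument at precisely that point.
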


\begin{proof}
We divide the proof into four steps to illustrate how regularization improves conditioning.

\paragraph{Step 1. Factor-wise bounds.}

For any factorization $W = U S V^\top$ with diagonal $S$,
\begin{align*}
\|W\| \leq \|U\| \|S\| \|V\|,
\quad
\|W^{-1}\| \leq \|V\| \|S^{-1}\| \|U^{-1}\|.
\end{align*}
Hence,
\begin{align*}
\kappa(W) \leq \kappa(U) \kappa(S) \kappa(V).
\end{align*}

\paragraph{Step 2. Singular value perturbation.}
According to Lemma~\ref{lem:sig},
\begin{align*}
|\sigma_i^{\reg,t} - \sigma_i^{\unreg,t}| \leq \epsilon_t,
\end{align*}
which implies
\begin{align*}
\kappa(S_p^{\reg,t}) \leq \frac{1 + \epsilon_t}{1 - \epsilon_t} \kappa(S_p^{\unreg,t}).
\end{align*}

\paragraph{Step 3. Orthonormal regularizer bounds factor condition numbers.}
By definition of $\mathcal{L}_\reg$ in Definition~\ref{def:reg}, and $\nabla \mathcal{L}_\train$ is invariant for all $t > 0$,
\begin{align*}
\kappa(U_p^{\reg,t}) < \kappa(U_p^{\unreg,t}),
\quad
\kappa(V_p^{\reg,t}) < \kappa(V_p^{\unreg,t}).
\end{align*}

\paragraph{Step 4: Combine bounds to compare condition numbers.}
By the above,
\begin{align*}
\kappa(W_p^{\reg,t})
\leq & ~
\kappa(U_p^{\reg,t}) \kappa(S_p^{\reg,t}) \kappa(V_p^{\reg,t}) \\
\leq & ~
\kappa(U_p^{\reg,t}) \kappa(V_p^{\reg,t}) \frac{1 + \epsilon_t}{1 - \epsilon_t} \kappa(S_p^{\unreg,t}),
\end{align*}
and
\begin{align*}
\kappa(W_p^{\unreg,t})
\geq
\kappa(U_p^{\unreg,t}) \kappa(S_p^{\unreg,t}) \kappa(V_p^{\unreg,t}).
\end{align*}
So,
\begin{align*}
\frac{\kappa(W_p^{\reg,t})}{\kappa(W_p^{\unreg,t})}
\leq
\frac{\kappa(U_p^{\reg,t}) \kappa(V_p^{\reg,t})}{\kappa(U_p^{\unreg,t}) \kappa(V_p^{\unreg,t})}
\cdot
\frac{1 + \epsilon_t}{1 - \epsilon_t}
< 1.
\end{align*}
Thus,
\begin{align*}
\kappa(W_p^{\reg,t}) < \kappa(W_p^{\unreg,t}),
\end{align*}
completing the proof.
\end{proof}
\section{Experiments}
\label{sec:exp}


\subsection{Numerical Experiments}
\label{sec:ana}

\paragraph{Analysis Method}
We analyze the the deviation of singular values (denoted by $\Delta \Sigma$) and singular vectors (denoted by $\Delta D$) from pre-trained weights during updating.
Our analysis suggests a significant difference in singular values and vectors' stability and an updating pattern of fine-tuning, LoRA, and $\method$.

The singular value and vector variations between pre-trained weight $W_0 \in \R^{m \times n}$ and tuned weight $W_t \in \R^{m \times n}$, which $t$ denotes the training step, could be defined as follows
\begin{align}
  \Delta \Sigma_t = \frac{1}{k} \sum_{i=1}^k | \sigma_t^i - \sigma_0^i |,
\end{align}
where $\Delta \Sigma_t$ represents the singular value difference between $W_0$ and $W_t$ at training step $t$. $\sigma^i_t$ denotes the $i$-th element in diagonal of $\Sigma_t$, where $\Sigma_t$ is decomposed from $W_t$ by performing SVD, $k = \min(m,n)$,
\begin{align*}
\Delta U_{t,j} = | \langle u_t^j, u_0^j \rangle |, ~~~~ \Delta V^\top_{t,i} = | \langle v_t^i, v_0^i \rangle |, ~~~~ \Delta D_t = 1 - \frac{1}{2 k} \sum_{i=0}^{k} (\Delta U_{t,i} + \Delta V^\top_{t,i}).
\end{align*}
Here, $k = \min(m,n)$; $u_t^j$ denotes the $j$-th column vector of matrix $U_t$, and $v_t^i$ denotes the $i$-th row vector of matrix $V_t^\top$; $\Delta D_t \in (0, 1)$ represents variation of singular vectors between $W_0$ and $W_t$ at training step $t$; $U_t$ and $V_t^\top$ are decomposed from $W_t$ by performing SVD.

We adopt the analysis on Llama 2 7B \cite{tli+23} using the first 100K data of MetaMathQA \cite{yjs+23}. We test fine-tuning, LoRA, and $\method$ (with and without regularizer). See Section~\ref{app:ana} for training details of the analysis.

\paragraph{Analysis Result}

Based on our collected data, this section analyzes the results of different training methods: fine-tuning, LoRA, and $\method$. The analysis data is illustrated in Figure~\ref{fig:result}.

\begin{figure*}[!ht]
\centerline{
\includegraphics[width=0.45\columnwidth]{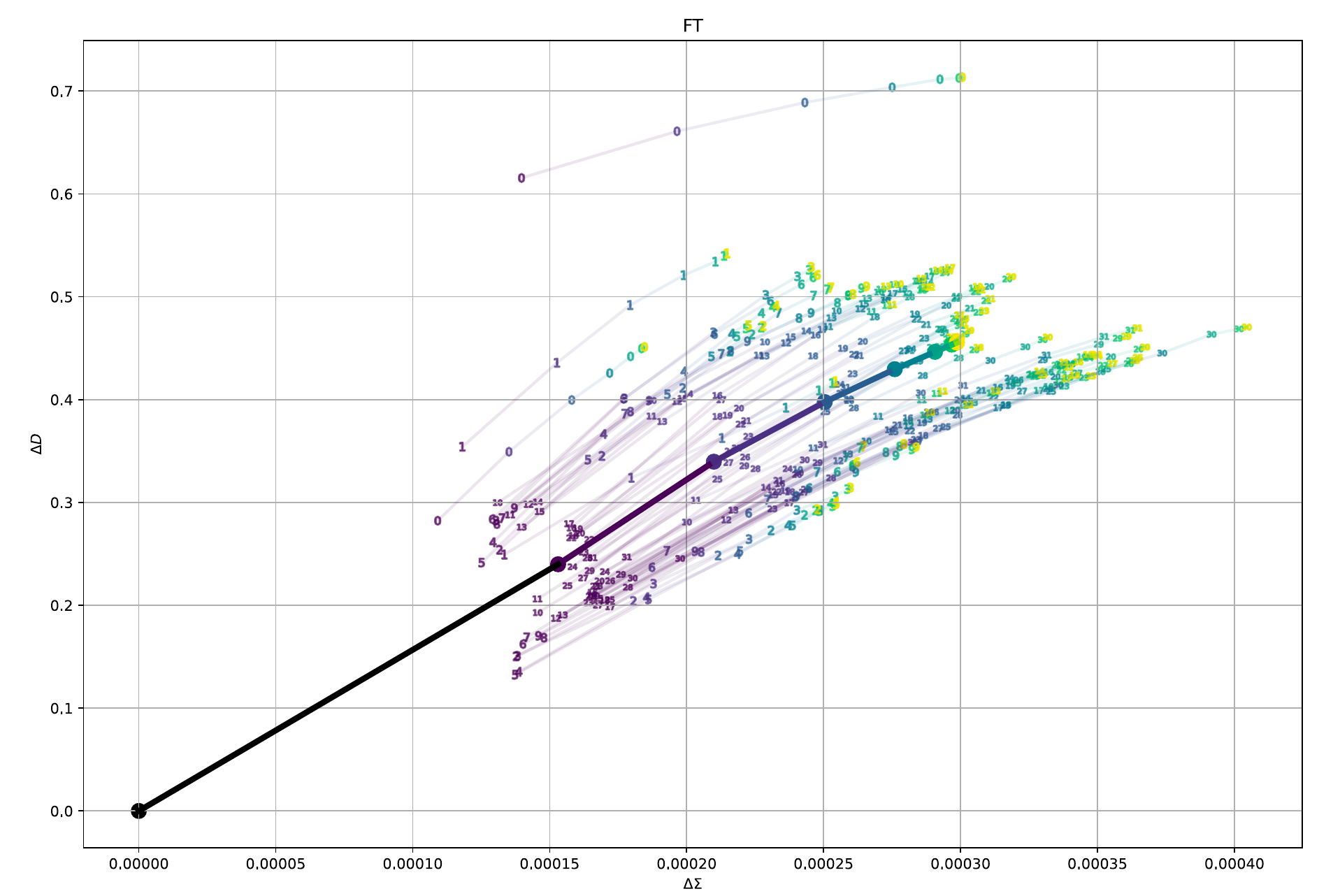}
\includegraphics[width=0.45\columnwidth]{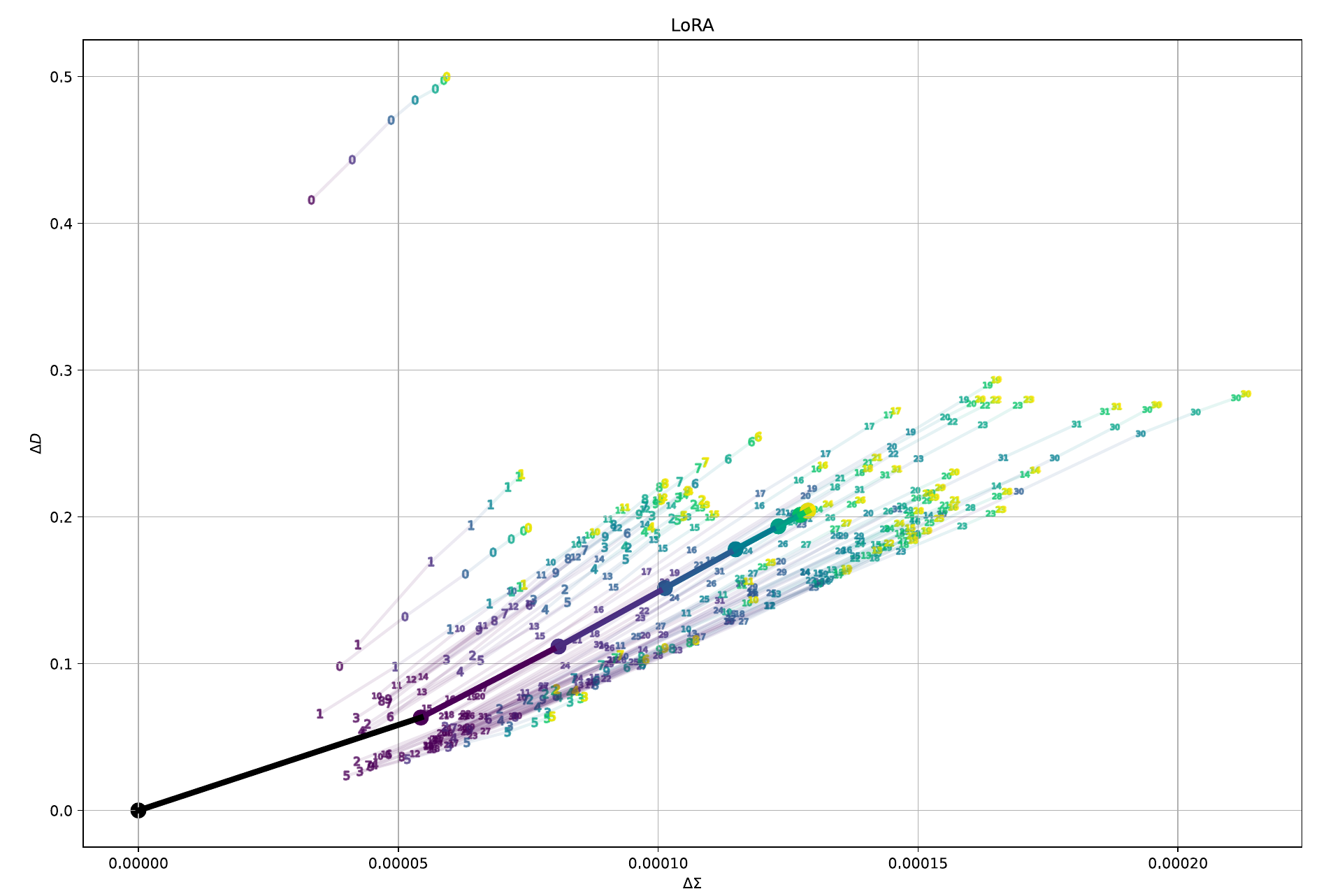}
}
\centerline{
\includegraphics[width=0.45\columnwidth]{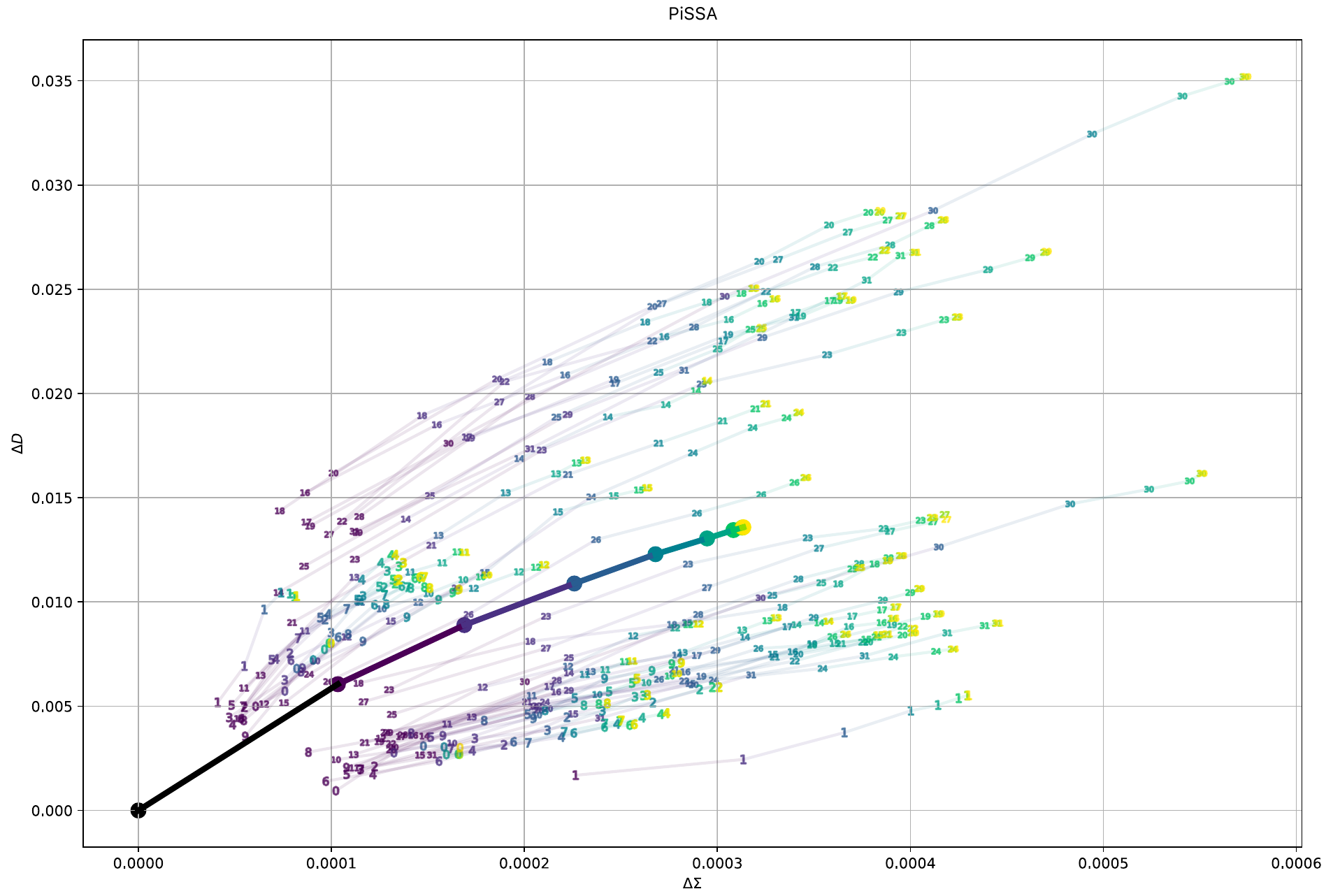}
\includegraphics[width=0.45\columnwidth]{data/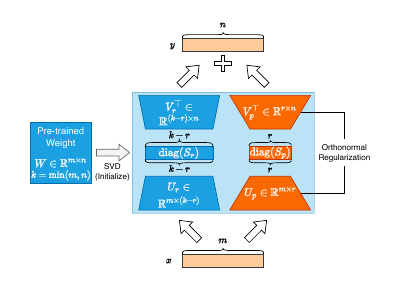}
}
\caption{\textbf{$\Delta D$ and $\Delta \Sigma$ of each trainable parameters during training steps.} Numbers in the plot represent layer of the weight. Dots represent mean $\Delta D$ and $\Delta \Sigma$ at specific step. Color from dark to light represent the time step from $0$ to $T$, where in these graphs $T=781$.}
\label{fig:result}
\end{figure*}

Based on our collected data, we analyze how different training methods - partial fine-tuning, LoRA, PiSSA and $\method$ - affect the pre-trained weights' structure and information preservation.

The analysis reveals several key insights about how these methods interact with the pre-trained knowledge:
\begin{itemize}
    \item Partial fine-tuning and LoRA show substantial alterations in singular vectors (large $\Delta D$), indicating significant disruption to the fundamental structure of the pre-trained weights. This extensive modification likely damages the model's carefully learned generalizations across multiple domains, leading to catastrophic forgetting. The parallel updating patterns across different layers suggest these methods make broad, potentially destructive changes throughout the model rather than targeted adaptations.

    \item $\method$ demonstrates significantly smaller changes in both singular values ($\Delta \Sigma$) and singular vectors ($\Delta D$) compared to other methods. This controlled modification suggests that $\method$ better preserves the pre-trained model's underlying knowledge structure while making precise adjustments for the downstream task. The orthonormal regularizer appears to act as a constraint that helps maintain the original geometric relationships within the weight matrices that encode the model's generalized capabilities.

    \item Different matrices in $\method$ show distinct, non-parallel updating patterns, unlike the uniform changes seen in other methods. This suggests $\method$ can identify and selectively modify the most relevant components for the target task while leaving other capabilities largely intact. This targeted adaptation explains why $\method$ can achieve better performance with less disruption to the model's general knowledge.

    \item We observe larger changes in both $\Delta D$ and $\Delta \Sigma$ for PiSSA compared to $\method$ along with updating patterns similar to LoRA and partial fine-tuning. Where the essential difference between PiSSA and $\method$ is the orthonormal regularizer, this empirically validates the regularizer's crucial role in preserving the pre-trained model's information structure while allowing efficient adaptation.
\end{itemize}

These patterns indicate $\method$'s ability to preserve the rich, generalized knowledge embedded in pre-trained weights while making precise adjustments for specific tasks. This property enables higher learning rates without over-fitting and explains $\method$'s improved performance across various benchmarks. The method's ability to maintain the model's fundamental structure while allowing targeted modifications represents a significant advance in efficient model adaptation.

\subsection{Empirical Experiments}

We conducted comparative experiments on different NLP tasks, including natural language generation (NLG) between $\method$, PiSSA \cite{mwz24}, LoRA \cite{hsw+22}, AdaLoRA \cite{zcb+23}, and full parameter fine-tuning.

\begin{figure*}[htbp]
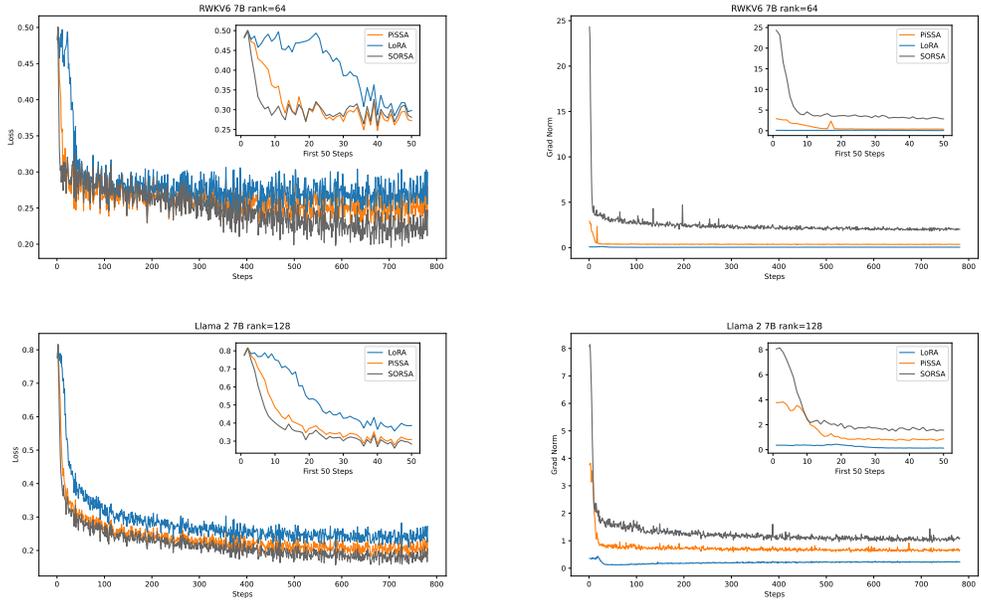

\centerline{
\includegraphics[width=0.48\columnwidth]{RWKV6-7B-r64/loss.pdf}
\includegraphics[width=0.48\columnwidth]{RWKV6-7B-r64/norm.pdf}
}
\centerline{
\includegraphics[width=0.48\columnwidth]{Llama2-7B-r128/loss.pdf}
\includegraphics[width=0.48\columnwidth]{Llama2-7B-r128/norm.pdf}
}
\caption{The training loss and gradient norm comparison between $\method$, PiSSA, and LoRA on MetaMathQA training of RWKV6 7B and Llama 2 7B. LoRA and PiSSA curves of Llama 2 7B are from \cite{mwz24}.}
\label{fig:training}
\end{figure*}

We conducted NLG tests on Llama 2 7B \cite{tli+23}, RWKV6 7B \cite{pga+24}, Mistral 7B v0.1 \cite{jsb+23} and Gemma 7B \cite{gemmateam24}. We trained the models using the first 100K data in MetaMathQA \cite{yjs+23} and evaluated the model on GSM-8K \cite{ckb+21} and MATH \cite{hbk+21}. We also trained the model on the first 100K data in CodeFeedback Filtered Instruction \cite{zzs+24} dataset and evaluated it on HumanEval \cite{ctj+21}. The training process followed identical setups as the experiments conducted in PiSSA \cite{mwz24}. All reported values are accuracy in percentage. See Section~\ref{app:exp} for more details and hyperparameters of the training. We quoted some PiSSA, LoRA, and full parameter fine-tuning results from \cite{mwz24}. Some of our experiments were conducted on a single NVIDIA A100-SXM4 (80GB) GPU, and others were conducted on a single NVIDIA H100-SXM4 (80GB) GPU. See Table~\ref{tab:exp} for the results and Figure~\ref{fig:training} for the loss and gradient norm comparison.

\begin{table*}[!ht]
\caption{Comparing $\method$ with other methods on NLG tasks. $^\dagger$ denotes results from \cite{mwz24}. We use {\bf TPara.} to represent trainable parameters.}
\centering
\begin{tabular}{|l| l| l | l | l | l| l|} \hline
{\bf Model} & {\bf Method} &  {\bf TPara.} & {\bf GSM-8K} & {\bf MATH} & {\bf HumanEval}\\ \hline
Llama 2 7B & Full FT & 6738M & 49.05$^\dagger$ & 7.22$^\dagger$ & 21.34$^\dagger$ \\
Llama 2 7B & LoRA & 320M & 42.30$^\dagger$ & 5.50$^\dagger$ & 18.29$^\dagger$ \\
Llama 2 7B & PiSSA & 320M & \underline{53.07}$^\dagger$ & \underline{7.44}$^\dagger$ & \underline{21.95}$^\dagger$ \\ 
Llama 2 7B & AdaLoRA & 320M & 47.30 & 6.48 & 19.51 \\ 
Llama 2 7B & $\method$ & 320M & {\bf 56.03} & {\bf 10.36} & {\bf 24.39} \\ \hline
RWKV6 7B & LoRA & 176M & 8.04\footnotemark{} & 7.38 & 15.24 \\ 
RWKV6 7B  & PiSSA & 176M & 32.07 & \underline{9.42} & \underline{17.07} \\ 
RWKV6 7B  & AdaLoRA & 176M & \underline{33.28} & 8.08 & 15.85 \\ 
RWKV6 7B  & $\method$ & 176M & {\bf 45.87} & {\bf 11.32} & {\bf 22.56} \\ \hline
Mistral 7B & Full FT & 7242M & 67.02$^\dagger$ & 18.60$^\dagger$ & 45.12$^\dagger$ \\ 
Mistral 7B & LoRA & 168M & 67.70$^\dagger$ & 19.68$^\dagger$ & 43.90$^\dagger$\\ 
Mistral 7B & PiSSA & 168M & \underline{72.86}$^\dagger$ & \underline{21.54}$^\dagger$ & \underline{46.95}$^\dagger$\\ 
Mistral 7B & AdaLoRA & 168M & 72.25 & 21.06 & 45.73\\ 
Mistral 7B & $\method$ & 168M & {\bf 73.09} & {\bf 21.86}& {\bf 47.56}\\ \hline
Gemma 7B & Full FT & 8538M & 71.34$^\dagger$ & 22.74$^\dagger$ & 46.95$^\dagger$ \\ 
Gemma 7B & LoRA & 200M & 74.90$^\dagger$ & 31.28$^\dagger$ & 53.66$^\dagger$\\ 
Gemma 7B & PiSSA & 200M & 77.94$^\dagger$ & {\bf 31.94} $^\dagger$ & \underline{54.27}$^\dagger$\\ 
Gemma 7B & AdaLoRA & 200M & {\bf 78.99} & \underline{31.44} & {\bf 55.49} \\ 
Gemma 7B & $\method$ & 200M & \underline{78.09} & 29.52 & {\bf 55.49}\\ \hline
\end{tabular}
\label{tab:exp}
\end{table*}

The results showed that across all models tested, $\method$ generally outperformed other methods, though with some notable exceptions. For mathematical evaluations on Llama 2 7B, $\method$ scored 56.03\% on GSM-8K and 10.36\% on MATH, significantly outperforming other methods. For the RWKV6 7B model, $\method$ achieved 45.87\% accuracy on GSM-8K and 11.32\% on MATH, surpassing both PiSSA and AdaLoRA, with AdaLoRA showing competitive performance on GSM-8K at 33.28\%. On Mistral 7B, $\method$ reached 73.09\% on GSM-8K and 21.86\% on MATH, showing modest improvements over AdaLoRA's strong performance of 72.25\% and 21.06\%, respectively. With Gemma 7B, the results were mixed - while AdaLoRA achieved the highest GSM-8K score at 78.99\% and competitive MATH performance at 31.44\%, $\method$ maintained strong performance with 78.09\% on GSM-8K. However, its MATH score of 29.52\% was lower than other methods. In coding evaluations, $\method$ and AdaLoRA showed strong performance on HumanEval, with both methods achieving 55.49\% on Gemma 7B, while $\method$ maintained an edge across other model variants. Additionally, we did not include loss and gradient norm curves in our figure because the regularizer in AdaLoRA and Gaussian initialization caused significantly higher initial loss values, making direct comparisons with other methods inappropriate.

\footnotetext{This significant under-perform due to LoRA failed to learn the GSM-8K required answer formatting behavior.}
The Figure~\ref{fig:training} reveals that $\method$ and PiSSA exhibit nearly identical loss curves at the beginning and even slightly higher than PiSSA on RWKV-6 training. However, when the training step is approximately $t > 300$, $\method$ steadily decreases its loss. In contrast, LoRA and PiSSA show a deceleration in their loss reduction. The observations on loss curves are also valid for the changing rate of gradient norm, where $\method$ showed a more consistent decrease in gradient norm compared to LoRA and PiSSA. This can be explained by Theorem~\ref{thm:condition}, especially at later stages of training.

However, due to the limitation of computing resources, we only trained and benchmarked a small number of tasks.
\section{Discussion and Conclusion}\label{sec:conclusion}

\label{sec:con}
In this paper, we introduced $\method$, a novel parameter-efficient fine-tuning (PEFT) method designed to enhance the adaptation of large language models (LLMs) for downstream tasks. $\method$ utilizes singular value decomposition (SVD) to split pre-trained weights into principal and residual components, only training the principal singular values and vectors while freezing the residuals. We implemented an orthonormal regularizer to maintain the orthonormality of singular vectors during training, ensuring efficient parameter updates and preserving the integrity of singular values.

Our experiments demonstrated that $\method$ outperforms existing PEFT methods, such as LoRA and PiSSA, in both convergence speed and accuracy on the NLG tasks. Specifically, Llama 2 7B, tuned with $\method$, achieved significant improvements in the GSM-8K and MATH benchmarks, highlighting the effectiveness of our approach.

We adopted singular values and vector analysis, comparing $\method$ with FT and LoRA. $\method$ is superior in preserving the pre-trained weight's singular values and vectors during training. This suggests an explanation for $\method$'s supreme performance demonstrated in the experiment. We also show the significance of the orthonormal regularizer through analysis.

Our gradient analysis provided a mathematical foundation for $\method$, demonstrating its convexity, Lipschitz continuity, and the crucial role of the regularizer in improving the optimization landscape. This theoretical framework explains $\method$'s empirical superior performance and offers valuable insights for future developments in adaptive learning algorithms.


Overall, $\method$ gives a new perspective on parameter-efficient fine-tuning, showcasing exceptional efficiency and robust performance. It outperforms existing methods like LoRA and PiSSA in several downstream tasks and maintains the practical benefits of low VRAM requirements, no inference latency, and ease of implementation. This innovative approach offers a promising direction of singular values and vector analysis for future research and practical applications in adapting pre-trained models, making it a pivotal development in the field.

\ifdefined\isarxiv
\bibliographystyle{alpha}
\bibliography{ref}
\else
\bibliographystyle{alpha}
\bibliography{ref}
\fi


\newpage
\onecolumn
\appendix

\begin{center}
    \textbf{\LARGE Appendix }
\end{center}

{\bf Roadmap.}
In the appendix, we present the experiments details in Section~\ref{app:exp}. Finally, we discuss the potential broader impact in Section~\ref{app:impact}.
\clearpage
\clearpage
\section{Experiments Details}
\label{app:exp}

\subsection{Analysis} \label{app:ana}
For the singular values and vectors analysis in Section~\ref{sec:ana}, we applied fine-tuning, LoRA and Conor (with and without orthonormal regularizer) on Llama 2 7B \cite{tli+23} model, training with the first 100K data in MetaMathQA \cite{yjs+23} dataset. We only calculated the loss on the response part. The models are trained with TF32 \& BF16 mix precision. See Table~\ref{tab:ana} for hyperparameters.

We used AdamW \cite{lh17} optimizer and cosine annealing scheduler in training. In the analysis, LoRA and Conor were only applied to \texttt{q\_proj} and \texttt{v\_proj} matrices. For FT, we set model's \texttt{q\_proj} and \texttt{v\_proj} matrices to trainable.

We also found we should only perform SVD for analysis using CPU, in order to get the precise analysis data.

\begin{table}[!ht]
\caption{Hyperparameters of training for the analysis}
\label{tab:ana}
\centering
\begin{tabular}{|l|c|c|c|c|} \hline
Model & Llama 2 7B & Llama 2 7B & Llama 2 7B & Llama 2 7B\\ \hline 
Method & FT & LoRA & Conor w/o reg & Conor\\ \hline
Mix-Precision & TF32+BF16 & TF32+BF16 & TF32+BF16 & TF32+BF16\\ 
Epoch & 1 & 1 & 1 & 1 \\ 
Batch Size & 128 & 128 & 128 & 128 \\ 
Max Length & 512 & 512 & 512 & 512 \\ 
Weight Decay & 0 & 0 & 0 & 0 \\ 
Warm-up Ratio & 0.03 & 0.03 & 0.03 & 0.03\\ 
Learning Rate & 2e-5 & 2e-5 & 2e-5 & 3e-5 \\ 
Grad Clip & False & False & False & False \\ 
Conor $\gamma$ & N/A & N/A & 0 & 5e-4 \\ 
Rank & N/A & 128 & 128 & 128\\ \hline
\end{tabular}
\end{table}

\subsection{NLG Experiments} 
For our NLG tasks, we adapted Llama 2 7B \cite{tli+23}, RWKV6 7B \cite{pga+24}, Mistral 7B v0.1 \cite{jsb+23} Gemma 7B \cite{gemmateam24} models by Conor. For GSM-8K \cite{ckb+21} and MATH \cite{hbk+21} evaluations, we trained those models with the first 100K data in MetaMathQA \cite{yjs+23} dataset. For HumanEval \cite{ctj+21} evaluation, we use the first 100K data in CodeFeedback Filtered Instruction \cite{zzs+24} dataset.

We used AdamW \cite{lh17} optimizer and cosine annealing scheduler in training. Conor adapters were applied on all linear matrices in every layer. We only calculated the loss on the response part. The models are loaded in FP32 and trained with TF32 \& BF16 mix precision. In our experiments, we selected a higher learning rate for Conor than other methods to counterbalance the negative effect of orthonormal regularizer on optimizing toward lower training loss. See Table~\ref{tab:exp-hyp:train} and \ref{tab:exp-hyp:eval} for hyperparameters. 

\begin{table}[!ht]
\caption{Hyperparameters for training with Conor, LoRA and PiSSA on different models for GSM-8K and MATH}
\label{tab:exp-hyp:train}
\centering
\begin{tabular}{|l|c|c|c|c|c|} \hline
Model & Llama 2 7B & RWKV6 7B & RWKV6 7B& Mistral 7B & Gemma 7B \\ \hline
Method & Conor & Conor & LoRA\&PiSSA & Conor & Conor \\ \hline
Mix-Precision & TF32+BF16 & TF32+BF16 & TF32+BF16 & TF32+BF16 & TF32+BF16 \\ 
Epoch & 1 & 1 & 1 & 1 & 1 \\ 
Batch Size & 128 & 128 & 128& 128 & 128 \\ 
Max Length & 512 & 512 & 512 & 512 & 512 \\ 
Weight Decay & 0 & 0 & 0& 0 & 0 \\ 
Warm-up Ratio & 0.03 & 0.03 & 0.03 & 0.03 & 0.03 \\ 
Learning Rate & 3e-5 & 3e-5 & 2e-5 & 3e-5 & 3e-5 \\ 
Grad Clip & 1.0 & 1.0 & 1.0 & 1.0 & 1.0 \\ 
Conor $\gamma$ & 4e-4 & 4e-4 & N/A & 4e-4 & 4e-4 \\ 
Rank & 128 & 64 & 64& 64 & 64 \\ \hline
\end{tabular}
\end{table}

\begin{table}[!ht]
\caption{Hyperparameters for evaluation with Conor, LoRA and PiSSA on different models for GSM-8K and MATH. ML denotes Max Length.}
\label{tab:exp-hyp:eval}
\centering
\begin{tabular}{|l|c|c|c|c|c|} \hline
Model & Llama 2 7B & RWKV6 7B & RWKV6 7B& Mistral 7B & Gemma 7B \\ \hline
Method & Conor & Conor & LoRA \& PiSSA & Conor & Conor \\ 
Precision & BF16 & FP32 & FP32 & BF16 & BF16 \\ 
Sampling & False & False & False & False & False \\ 
Top-P & 1.0 & 1.0 & 1.0 & 1.0 & 1.0 \\ 
ML for GSM-8K & 1024 & 1024 & 1024 & 1024 & 1024 \\ 
ML for MATH & 2048 & 2048 & 2048 & 2048 & 2048 \\ 
ML for HumanEval & 2048 & 2048 & 2048 & 2048 & 2048\\ \hline
\end{tabular}
\end{table}

\begin{table}[!ht]
\caption{Hyperparameters of training for with AdaLoRA on different models for GSM-8K and MATH}
\label{tab:exp-hyp-ada:train}
\centering
\begin{tabular}{|l|c|c|c|c|} \hline
Model & Llama 2 7B & Mistral 7B & Gemma 7B & RWKV6 7B\\ \hline
Method & AdaLoRA & AdaLoRA & AdaLoRA & AdaLoRA \\ \hline
Mix-Precision & TF32+BF16 & TF32+BF16 & TF32+BF16 & TF32+BF16 \\ 
Epoch & 1 & 1 & 1 & 1 \\ 
Batch Size & 128 & 128 & 128 & 128 \\ 
Max Length & 512 & 512 & 512 & 512 \\ 
Weight Decay & 0 & 0 & 0 & 0 \\ 
Warm-up Ratio & 0.03 & 0.03 & 0.03 & 0.03 \\ 
Learning Rate & 2e-5 & 2e-5 & 2e-5 & 2e-5 \\ 
Grad Clip & 1.0 & 1.0 & 1.0 & 1.0 \\ 
$\beta_1$ & 0.85 & 0.85 & 0.85 & 0.85 \\ 
$\beta_2$ & 0.85 & 0.85 & 0.85 & 0.85 \\ 
$r_{init}$ & 128 & 64 & 64 & 64 \\ 
$r_{target}$ & 128 & 64 & 64 & 64 \\ 
$t_{init}$ & 100 & 100 & 100 & 100 \\ 
$t_{final}$ & 600 & 600 & 600 & 600 \\ \hline
\end{tabular}
\end{table}

\begin{table}[!ht]
\caption{Hyperparameters of evaluation for with AdaLoRA on different models for GSM-8K and MATH. ML denotes Max Length.}
\label{tab:exp-hyp-ada:eval}
\centering
\begin{tabular}{|l|c|c|c|c|} \hline
Model & Llama 2 7B & Mistral 7B & Gemma 7B & RWKV6 7B\\ \hline
Method & AdaLoRA & AdaLoRA & AdaLoRA & AdaLoRA \\ \hline
Precision & BF16 & BF16 & BF16 & FP32\\ 
Sampling & False & False & False & False  \\ 
Top-P & 1.0 & 1.0 & 1.0 & 1.0 \\ 
ML for GSM-8K & 1024 & 1024 & 1024 & 1024\\ 
ML for MATH & 2048 & 2048 & 2048 & 2048\\ 
ML for HumanEval & 2048 & 2048 & 2048 & 2048\\ \hline
\end{tabular}
\end{table}







\clearpage
\clearpage
\section{Broader Impact}
\label{app:impact}

In this paper, we introduce an innovative PEFT method in machine learning. Our approach significantly streamlined the model's tuning process, particularly for large-scale models, addressing both computational efficiency and environmental sustainability. As we push the boundaries of what is possible with Machine Learning, it is essential to consider the broader impacts of these advancements on the environment and ethical standards within the field.

Our experiments found that adapting with Conor could reduce VRAM consumption by up to 80\%. This significant reduction in hardware resource requirements also suggests less energy consumption than entire parameter fine-tuning methods. By enhancing efficiency, our approach could significantly reduce the carbon footprint of Machine Learning operations.



\ifdefined\isarxiv
\else
\clearpage
\clearpage
\section*{NeurIPS Paper Checklist}

\begin{enumerate}

\item {\bf Claims}
    \item[] Question: Do the main claims made in the abstract and introduction accurately reflect the paper's contributions and scope?
    \item[] Answer: \answerYes{} 
    \item[] Justification: The abstract and introduction clearly and accurately introduce the core idea and contribution of this paper.
    \item[] Guidelines:
    \begin{itemize}
        \item The answer NA means that the abstract and introduction do not include the claims made in the paper.
        \item The abstract and/or introduction should clearly state the claims made, including the contributions made in the paper and important assumptions and limitations. A No or NA answer to this question will not be perceived well by the reviewers. 
        \item The claims made should match theoretical and experimental results, and reflect how much the results can be expected to generalize to other settings. 
        \item It is fine to include aspirational goals as motivation as long as it is clear that these goals are not attained by the paper. 
    \end{itemize}

\item {\bf Limitations}
    \item[] Question: Does the paper discuss the limitations of the work performed by the authors?
    \item[] Answer: \answerYes{} 
    \item[] Justification: We admit that we only provide a small number of experiments due to limited computing resources in Section~\ref{sec:exp}.
    \item[] Guidelines:
    \begin{itemize}
        \item The answer NA means that the paper has no limitation while the answer No means that the paper has limitations, but those are not discussed in the paper. 
        \item The authors are encouraged to create a separate "Limitations" section in their paper.
        \item The paper should point out any strong assumptions and how robust the results are to violations of these assumptions (e.g., independence assumptions, noiseless settings, model well-specification, asymptotic approximations only holding locally). The authors should reflect on how these assumptions might be violated in practice and what the implications would be.
        \item The authors should reflect on the scope of the claims made, e.g., if the approach was only tested on a few datasets or with a few runs. In general, empirical results often depend on implicit assumptions, which should be articulated.
        \item The authors should reflect on the factors that influence the performance of the approach. For example, a facial recognition algorithm may perform poorly when image resolution is low or images are taken in low lighting. Or a speech-to-text system might not be used reliably to provide closed captions for online lectures because it fails to handle technical jargon.
        \item The authors should discuss the computational efficiency of the proposed algorithms and how they scale with dataset size.
        \item If applicable, the authors should discuss possible limitations of their approach to address problems of privacy and fairness.
        \item While the authors might fear that complete honesty about limitations might be used by reviewers as grounds for rejection, a worse outcome might be that reviewers discover limitations that aren't acknowledged in the paper. The authors should use their best judgment and recognize that individual actions in favor of transparency play an important role in developing norms that preserve the integrity of the community. Reviewers will be specifically instructed to not penalize honesty concerning limitations.
    \end{itemize}

\item {\bf Theory assumptions and proofs}
    \item[] Question: For each theoretical result, does the paper provide the full set of assumptions and a complete (and correct) proof?
    \item[] Answer: \answerYes{} 
    \item[] Justification: \justificationTODO{}
    \item[] Guidelines:
    \begin{itemize}
        \item The answer NA means that the paper does not include theoretical results. 
        \item All the theorems, formulas, and proofs in the paper should be numbered and cross-referenced.
        \item All assumptions should be clearly stated or referenced in the statement of any theorems.
        \item The proofs can either appear in the main paper or the supplemental material, but if they appear in the supplemental material, the authors are encouraged to provide a short proof sketch to provide intuition. 
        \item Inversely, any informal proof provided in the core of the paper should be complemented by formal proofs provided in appendix or supplemental material.
        \item Theorems and Lemmas that the proof relies upon should be properly referenced. 
    \end{itemize}

    \item {\bf Experimental result reproducibility}
    \item[] Question: Does the paper fully disclose all the information needed to reproduce the main experimental results of the paper to the extent that it affects the main claims and/or conclusions of the paper (regardless of whether the code and data are provided or not)?
    \item[] Answer: \answerYes{} 
    \item[] Justification: We provide detailed experiment setups in Section~\ref{app:exp}, and we provide the code in the supplemental materials.
    \item[] Guidelines:
    \begin{itemize}
        \item The answer NA means that the paper does not include experiments.
        \item If the paper includes experiments, a No answer to this question will not be perceived well by the reviewers: Making the paper reproducible is important, regardless of whether the code and data are provided or not.
        \item If the contribution is a dataset and/or model, the authors should describe the steps taken to make their results reproducible or verifiable. 
        \item Depending on the contribution, reproducibility can be accomplished in various ways. For example, if the contribution is a novel architecture, describing the architecture fully might suffice, or if the contribution is a specific model and empirical evaluation, it may be necessary to either make it possible for others to replicate the model with the same dataset, or provide access to the model. In general. releasing code and data is often one good way to accomplish this, but reproducibility can also be provided via detailed instructions for how to replicate the results, access to a hosted model (e.g., in the case of a large language model), releasing of a model checkpoint, or other means that are appropriate to the research performed.
        \item While NeurIPS does not require releasing code, the conference does require all submissions to provide some reasonable avenue for reproducibility, which may depend on the nature of the contribution. For example
        \begin{enumerate}
            \item If the contribution is primarily a new algorithm, the paper should make it clear how to reproduce that algorithm.
            \item If the contribution is primarily a new model architecture, the paper should describe the architecture clearly and fully.
            \item If the contribution is a new model (e.g., a large language model), then there should either be a way to access this model for reproducing the results or a way to reproduce the model (e.g., with an open-source dataset or instructions for how to construct the dataset).
            \item We recognize that reproducibility may be tricky in some cases, in which case authors are welcome to describe the particular way they provide for reproducibility. In the case of closed-source models, it may be that access to the model is limited in some way (e.g., to registered users), but it should be possible for other researchers to have some path to reproducing or verifying the results.
        \end{enumerate}
    \end{itemize}

\item {\bf Open access to data and code}
    \item[] Question: Does the paper provide open access to the data and code, with sufficient instructions to faithfully reproduce the main experimental results, as described in supplemental material?
    \item[] Answer: \answerYes{} 
    \item[] Justification: We only use open-source datasets and models to conduct the experiments. The code for training and evaluating is available in the supplemental materials. We will also release the code on GitHub once the paper is accepted.
    \item[] Guidelines:
    \begin{itemize}
        \item The answer NA means that paper does not include experiments requiring code.
        \item Please see the NeurIPS code and data submission guidelines (\url{https://nips.cc/public/guides/CodeSubmissionPolicy}) for more details.
        \item While we encourage the release of code and data, we understand that this might not be possible, so “No” is an acceptable answer. Papers cannot be rejected simply for not including code, unless this is central to the contribution (e.g., for a new open-source benchmark).
        \item The instructions should contain the exact command and environment needed to run to reproduce the results. See the NeurIPS code and data submission guidelines (\url{https://nips.cc/public/guides/CodeSubmissionPolicy}) for more details.
        \item The authors should provide instructions on data access and preparation, including how to access the raw data, preprocessed data, intermediate data, and generated data, etc.
        \item The authors should provide scripts to reproduce all experimental results for the new proposed method and baselines. If only a subset of experiments are reproducible, they should state which ones are omitted from the script and why.
        \item At submission time, to preserve anonymity, the authors should release anonymized versions (if applicable).
        \item Providing as much information as possible in supplemental material (appended to the paper) is recommended, but including URLs to data and code is permitted.
    \end{itemize}

\item {\bf Experimental setting/details}
    \item[] Question: Does the paper specify all the training and test details (e.g., data splits, hyperparameters, how they were chosen, type of optimizer, etc.) necessary to understand the results?
    \item[] Answer: \answerYes{} 
    \item[] Justification: We provide all details in Section~\ref{app:exp}.
    \item[] Guidelines:
    \begin{itemize}
        \item The answer NA means that the paper does not include experiments.
        \item The experimental setting should be presented in the core of the paper to a level of detail that is necessary to appreciate the results and make sense of them.
        \item The full details can be provided either with the code, in appendix, or as supplemental material.
    \end{itemize}

\item {\bf Experiment statistical significance}
    \item[] Question: Does the paper report error bars suitably and correctly defined or other appropriate information about the statistical significance of the experiments?
    \item[] Answer: \answerYes{} 
    \item[] Justification: 
    The experiments are conducted by multi-run. The error bar is too small to be reported.
    \item[] Guidelines:
    \begin{itemize}
        \item The answer NA means that the paper does not include experiments.
        \item The authors should answer "Yes" if the results are accompanied by error bars, confidence intervals, or statistical significance tests, at least for the experiments that support the main claims of the paper.
        \item The factors of variability that the error bars are capturing should be clearly stated (for example, train/test split, initialization, random drawing of some parameter, or overall run with given experimental conditions).
        \item The method for calculating the error bars should be explained (closed form formula, call to a library function, bootstrap, etc.)
        \item The assumptions made should be given (e.g., Normally distributed errors).
        \item It should be clear whether the error bar is the standard deviation or the standard error of the mean.
        \item It is OK to report 1-sigma error bars, but one should state it. The authors should preferably report a 2-sigma error bar than state that they have a 96\% CI, if the hypothesis of Normality of errors is not verified.
        \item For asymmetric distributions, the authors should be careful not to show in tables or figures symmetric error bars that would yield results that are out of range (e.g. negative error rates).
        \item If error bars are reported in tables or plots, The authors should explain in the text how they were calculated and reference the corresponding figures or tables in the text.
    \end{itemize}

\item {\bf Experiments compute resources}
    \item[] Question: For each experiment, does the paper provide sufficient information on the computer resources (type of compute workers, memory, time of execution) needed to reproduce the experiments?
    \item[] Answer: \answerYes{} 
    \item[] Justification: We provide the resource we use in Section~\ref{sec:exp}.
    \item[] Guidelines:
    \begin{itemize}
        \item The answer NA means that the paper does not include experiments.
        \item The paper should indicate the type of compute workers CPU or GPU, internal cluster, or cloud provider, including relevant memory and storage.
        \item The paper should provide the amount of compute required for each of the individual experimental runs as well as estimate the total compute. 
        \item The paper should disclose whether the full research project required more compute than the experiments reported in the paper (e.g., preliminary or failed experiments that didn't make it into the paper). 
    \end{itemize}
    
\item {\bf Code of ethics}
    \item[] Question: Does the research conducted in the paper conform, in every respect, with the NeurIPS Code of Ethics \url{https://neurips.cc/public/EthicsGuidelines}?
    \item[] Answer: \answerYes{} 
    \item[] Justification: This paper conform the NeurIPS Code of Ethics.
    \item[] Guidelines:
    \begin{itemize}
        \item The answer NA means that the authors have not reviewed the NeurIPS Code of Ethics.
        \item If the authors answer No, they should explain the special circumstances that require a deviation from the Code of Ethics.
        \item The authors should make sure to preserve anonymity (e.g., if there is a special consideration due to laws or regulations in their jurisdiction).
    \end{itemize}

\item {\bf Broader impacts}
    \item[] Question: Does the paper discuss both potential positive societal impacts and negative societal impacts of the work performed?
    \item[] Answer: \answerYes{} 
    \item[] Justification: We discussed potential impacts in Section~\ref{app:impact}.
    \item[] Guidelines:
    \begin{itemize}
        \item The answer NA means that there is no societal impact of the work performed.
        \item If the authors answer NA or No, they should explain why their work has no societal impact or why the paper does not address societal impact.
        \item Examples of negative societal impacts include potential malicious or unintended uses (e.g., disinformation, generating fake profiles, surveillance), fairness considerations (e.g., deployment of technologies that could make decisions that unfairly impact specific groups), privacy considerations, and security considerations.
        \item The conference expects that many papers will be foundational research and not tied to particular applications, let alone deployments. However, if there is a direct path to any negative applications, the authors should point it out. For example, it is legitimate to point out that an improvement in the quality of generative models could be used to generate deepfakes for disinformation. On the other hand, it is not needed to point out that a generic algorithm for optimizing neural networks could enable people to train models that generate Deepfakes faster.
        \item The authors should consider possible harms that could arise when the technology is being used as intended and functioning correctly, harms that could arise when the technology is being used as intended but gives incorrect results, and harms following from (intentional or unintentional) misuse of the technology.
        \item If there are negative societal impacts, the authors could also discuss possible mitigation strategies (e.g., gated release of models, providing defenses in addition to attacks, mechanisms for monitoring misuse, mechanisms to monitor how a system learns from feedback over time, improving the efficiency and accessibility of ML).
    \end{itemize}
    
\item {\bf Safeguards}
    \item[] Question: Does the paper describe safeguards that have been put in place for responsible release of data or models that have a high risk for misuse (e.g., pretrained language models, image generators, or scraped datasets)?
    \item[] Answer: \answerNA{} 
    \item[] Justification: This paper poses no such risks. We do not release any model or data.
    \item[] Guidelines:
    \begin{itemize}
        \item The answer NA means that the paper poses no such risks.
        \item Released models that have a high risk for misuse or dual-use should be released with necessary safeguards to allow for controlled use of the model, for example by requiring that users adhere to usage guidelines or restrictions to access the model or implementing safety filters. 
        \item Datasets that have been scraped from the Internet could pose safety risks. The authors should describe how they avoided releasing unsafe images.
        \item We recognize that providing effective safeguards is challenging, and many papers do not require this, but we encourage authors to take this into account and make a best faith effort.
    \end{itemize}

\item {\bf Licenses for existing assets}
    \item[] Question: Are the creators or original owners of assets (e.g., code, data, models), used in the paper, properly credited and are the license and terms of use explicitly mentioned and properly respected?
    \item[] Answer: \answerYes{} 
    \item[] Justification: We follow all the license and cite all the assets properly.
    We conducted NLG tests on Llama 2 7B \cite{tli+23}, RWKV6 7B \cite{pga+24}, Mistral 7B v0.1 \cite{jsb+23} and Gemma 7B \cite{gemmateam24}. We trained the models using the first 100K data in MetaMathQA \cite{yjs+23} and evaluated the model on GSM-8K \cite{ckb+21} and MATH \cite{hbk+21}. We also trained the model on the first 100K data in CodeFeedback Filtered Instruction \cite{zzs+24} dataset and evaluated it on HumanEval \cite{ctj+21}.
    \item[] Guidelines:
    \begin{itemize}
        \item The answer NA means that the paper does not use existing assets.
        \item The authors should cite the original paper that produced the code package or dataset.
        \item The authors should state which version of the asset is used and, if possible, include a URL.
        \item The name of the license (e.g., CC-BY 4.0) should be included for each asset.
        \item For scraped data from a particular source (e.g., website), the copyright and terms of service of that source should be provided.
        \item If assets are released, the license, copyright information, and terms of use in the package should be provided. For popular datasets, \url{paperswithcode.com/datasets} has curated licenses for some datasets. Their licensing guide can help determine the license of a dataset.
        \item For existing datasets that are re-packaged, both the original license and the license of the derived asset (if it has changed) should be provided.
        \item If this information is not available online, the authors are encouraged to reach out to the asset's creators.
    \end{itemize}

\item {\bf New assets}
    \item[] Question: Are new assets introduced in the paper well documented and is the documentation provided alongside the assets?
    \item[] Answer: \answerNA{} 
    \item[] Justification: The paper does not release new assets.
    \item[] Guidelines:
    \begin{itemize}
        \item The answer NA means that the paper does not release new assets.
        \item Researchers should communicate the details of the dataset/code/model as part of their submissions via structured templates. This includes details about training, license, limitations, etc. 
        \item The paper should discuss whether and how consent was obtained from people whose asset is used.
        \item At submission time, remember to anonymize your assets (if applicable). You can either create an anonymized URL or include an anonymized zip file.
    \end{itemize}

\item {\bf Crowdsourcing and research with human subjects}
    \item[] Question: For crowdsourcing experiments and research with human subjects, does the paper include the full text of instructions given to participants and screenshots, if applicable, as well as details about compensation (if any)? 
    \item[] Answer: \answerNA{} 
    \item[] Justification: The paper does not involve crowdsourcing nor research with human subjects.
    \item[] Guidelines:
    \begin{itemize}
        \item The answer NA means that the paper does not involve crowdsourcing nor research with human subjects.
        \item Including this information in the supplemental material is fine, but if the main contribution of the paper involves human subjects, then as much detail as possible should be included in the main paper. 
        \item According to the NeurIPS Code of Ethics, workers involved in data collection, curation, or other labor should be paid at least the minimum wage in the country of the data collector. 
    \end{itemize}

\item {\bf Institutional review board (IRB) approvals or equivalent for research with human subjects}
    \item[] Question: Does the paper describe potential risks incurred by study participants, whether such risks were disclosed to the subjects, and whether Institutional Review Board (IRB) approvals (or an equivalent approval/review based on the requirements of your country or institution) were obtained?
    \item[] Answer: \answerNA{} 
    \item[] Justification: The paper does not involve crowdsourcing nor research with human subjects.
    \item[] Guidelines:
    \begin{itemize}
        \item The answer NA means that the paper does not involve crowdsourcing nor research with human subjects.
        \item Depending on the country in which research is conducted, IRB approval (or equivalent) may be required for any human subjects research. If you obtained IRB approval, you should clearly state this in the paper. 
        \item We recognize that the procedures for this may vary significantly between institutions and locations, and we expect authors to adhere to the NeurIPS Code of Ethics and the guidelines for their institution. 
        \item For initial submissions, do not include any information that would break anonymity (if applicable), such as the institution conducting the review.
    \end{itemize}

\item {\bf Declaration of LLM usage}
    \item[] Question: Does the paper describe the usage of LLMs if it is an important, original, or non-standard component of the core methods in this research? Note that if the LLM is used only for writing, editing, or formatting purposes and does not impact the core methodology, scientific rigorousness, or originality of the research, declaration is not required.
    \item[] Answer: \answerNA{} 
    \item[] Justification: We only use LLM to polish our writing.
    \item[] Guidelines:
    \begin{itemize}
        \item The answer NA means that the core method development in this research does not involve LLMs as any important, original, or non-standard components.
        \item Please refer to our LLM policy (\url{https://neurips.cc/Conferences/2025/LLM}) for what should or should not be described.
    \end{itemize}

\end{enumerate}

\fi

\end{document}